\newtheorem{theorem}{Theorem}
\newtheorem{proposition}{Proposition}
\newtheorem{lemma}{Lemma}
\newcommand{\keypoint}[1]{\vspace{0.1cm}\noindent\textbf{#1}\quad}
\ificcvfinal\pagestyle{empty}\fi
\begin{document}

\title{Energy-Based Open-World Uncertainty Modeling \\ for Confidence Calibration}

\author{

Yezhen Wang$^{1}$\thanks{Equal contribution. Ordered by dice rolling. Correspondence to {yezhen.wang0305}@gmail.com.}
\quad
Bo Li$^{1}$\footnotemark[1]
\quad
Tong Che$^{2}$\footnotemark[1]
\quad
Kaiyang Zhou$^{3}$
\\
\quad
Ziwei Liu$^{3}$
\quad
Dongsheng Li$^{1}$
\\ \\
$^1$Microsoft Research Asia
\quad
$^{2}$MILA
\quad
$^{3}$S-Lab, Nanyang Technological University
}

\maketitle
\ificcvfinal\thispagestyle{empty}\fi

\begin{abstract}
Confidence calibration is of great importance to the reliability of decisions made by machine learning systems. However, discriminative classifiers based on deep neural networks are often criticized for producing overconfident predictions that fail to reflect the true correctness likelihood of classification accuracy. We argue that such an inability to model uncertainty is mainly caused by the closed-world nature in softmax: a model trained by the cross-entropy loss will be forced to classify input into one of $K$ pre-defined categories with high probability. To address this problem, we for the first time propose a novel $K$+1-way softmax formulation, which incorporates the modeling of open-world uncertainty as the extra dimension. To unify the learning of the original $K$-way classification task and the extra dimension that models uncertainty, we \textbf{1)} propose a novel energy-based objective function, and moreover, \textbf{2)} theoretically prove that optimizing such an objective essentially forces the extra dimension to capture the marginal data distribution. Extensive experiments show that our approach, Energy-based Open-World Softmax (EOW-Softmax), is superior to existing state-of-the-art methods in improving confidence calibration.
\end{abstract}
\section{Introduction}\label{sec:intro}
Given the considerable success achieved so far by deep neural networks (DNNs), one might be wondering if DNN-based systems can be readily deployed to solve real-world problems. On the one hand, DNNs can achieve high accuracy if trained with large-scale datasets~\cite{he2016deep}. But on the other hand, contemporary DNNs are often criticized for producing overconfident predictions~\cite{guo2017calibration}, which fail to represent the true correctness likelihood of accuracy. This has raised concerns over safety and reliability for using machine learning systems in real-world scenarios. Having a confidence-calibrated system is critical. For instance, in healthcare applications, the intelligence system should produce low-confidence predictions when it is uncertain about the input---say they differ significantly from the training data---so the decision-making process can be transferred to human doctors for more accurate diagnosis and safer handling.
Research on confidence calibration for DNNs has received increasing attention in recent years~\cite{guo2017calibration,liang2018enhancing,hsu2020generalized,lakshminarayanan2017simple,liang2018enhancing}. Since most classifiers are based on softmax, a common practice to improve calibration is to insert a temperature scaling parameter to the softmax function and adjust it in a validation set~\cite{guo2017calibration}. Besides, methods like Smoothing labels~\cite{szegedy2016rethinking,muller2019does}, which essentially combines the one-hot ground-truth vector with a uniform distribution, has also been shown effective in improving calibration. 

\begin{figure*}[!t]
    \centering
    \includegraphics[width=0.85\textwidth]{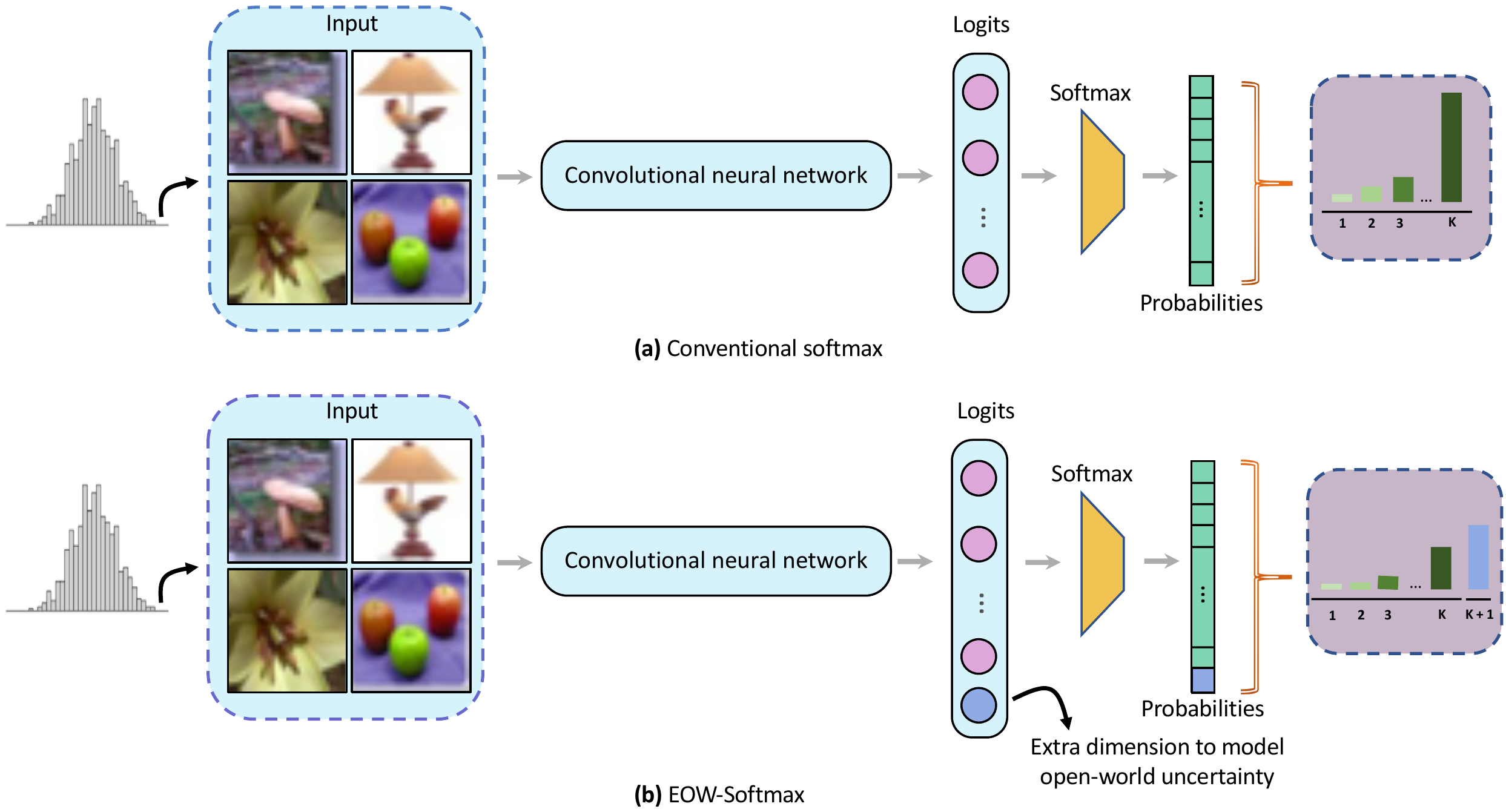}
    \caption{Comparison between (a) the conventional softmax and (b) our proposed Energy-based Open-World softmax (EOW-Softmax). Our new formulation introduces an extra dimension to model uncertainty, which is supposed to produce high scores when the input deviates from the training data distribution. In this way, the original $K$ classification scores can be well calibrated.}
    \label{fig:intro}
\end{figure*}

However, most existing confidence calibration methods have overlooked the underlying problem that causes neural network classifiers to generate overconfident predictions, i.e.~the inability to model \emph{uncertainty} in output probabilities. We argue that the culprit for causing such a problem is the closed-world nature in softmax~\cite{bendale2016towards,padhy2020revisiting}. This is easy to understand: during training the model is asked to classify input into one of $K$ pre-defined categories with high probability (due to the cross-entropy loss), and as such, the model has no choice but to assign one of the $K$ categories to any unseen data, likely with high probability as well.

A potential countermeasure is to adopt a $K+1$-way formulation where the new category can represent uncertainty about the input data. In this way, the $K$ classification scores might be better regularized, and hence better calibrated. However, learning such a classifier is challenging as we do not have access to those data with the $K+1$-th label, thus lacking supervision to teach the network when to give low/high confidence. 
Furthermore, designing the extra dimension is a non-trivial task as it is directly linked to the formulation of the learning objective. It is also unclear how such a dimension should be constructed, e.g., to design it as another logit produced by the same network or an independent branch that regresses to uncertainty.

In this paper, we propose \emph{Energy-based Open-World Softmax} (EOW-Softmax), a novel approach that introduces a $K+1$-way softmax based on energy functions~\cite{lecun2006tutorial}. Specifically, the neural network classifier is designed to produce $K+1$ logits, where the first $K$ dimensions encode the scores for the original $K$-way classification task, while the extra dimension aims to model open-world uncertainty. See Figure~\ref{fig:intro} for a comparison between a model based on the conventional softmax and that based on our EOW-Softmax. Besides, we resort to an energy-based $K+1$-way classification objective function to unify the learning of the $K$-way classification task and 
the uncertainty modeling. Further more, we theoretically justify that optimizing the proposed objective function essentially forces 
the summation of original $K$ softmax scores ($K+1$ scores in total) to be directly proportional to the marginal density $p(x)$, hence explaining why our EOW-Softmax helps calibrate a model's confidence estimates.

The \textbf{contributions} of this paper are summarized as follows. \textbf{1)} First, we overcome the closed-world softmax problem by transforming the conventional $K$-way softmax to a novel $K+1$-way formulation, where the extra dimension is designed to model open-world uncertainty. \textbf{2)} Second, a novel energy-based objective function is developed to unify the learning of the original $K$-way classification task and the uncertainty modeling. \textbf{3)} A theoretical proof is further provided to explain why our learning objective can help the network capture uncertainty. \textbf{4)} Finally, we conduct extensive experiments on standard benchmark datasets to demonstrate that our method can lead to a better calibrated model compared with other state-of-the-arts.

\section{Related Works}\label{sec:related_work}

\keypoint{Confidence Calibration}
With the emergence of deep learning technologies and their wide successes, concerns over whether they are reliable to be deployed in practice have also arisen. This is because researchers have found that contemporary deep neural networks (DNNs) often produce overconfident predictions~\cite{guo2017calibration}, even on input images that are totally unrecognizable to humans~\cite{NguyenYC14}. Many approaches for improving confidence calibration have been developed. A widely used method is temperature scaling~\cite{guo2017calibration,liang2018enhancing,hsu2020generalized}, which inserts a scaling parameter to the softmax formulation (called `temperature') and adjusts it in a validation set with a goal to `soften' the softmax probabilities. Regularization methods, such as label smoothing~\cite{szegedy2016rethinking} and Mixup~\cite{thulasidasan2019mixup}, have also been demonstrated effective in improving calibration. In particular, label smoothing modifies the ground-truth labels by fusing them with a uniform distribution, essentially forcing neural networks to produce `more flattened' probabilities; whereas Mixup is a data augmentation method that randomly mixes two instances at both the image and label space, with a byproduct effect of improving calibration. Bayesian methods have also been explored for calibration. For instance, Monte Carlo Dropout~\cite{gal2016dropout} applies dropout in both training and testing; Deep Ensembles~\cite{lakshminarayanan2017simple} uses as prediction the output averaged over an ensemble of models. Adding adversarial perturbations to the input has been found effective in smoothing the output probabilities~\cite{lakshminarayanan2017simple,liang2018enhancing}. In~\cite{lee2018training}, a GAN model~\cite{goodfellow2014generative} is trained to generate out-of-distribution (OOD) data and the classifier is encouraged to produce low-confidence probabilities on these data. Such an idea has also been investigated in~\cite{hein2019relu} where adversarial perturbations are utilized to synthesize OOD data. In~\cite{grathwohl2019your}, a Joint Energy-based Model (JEM) is proposed to improve calibration by learning the joint distribution based on energy functions~\cite{lecun2006tutorial}. A recent work~\cite{wald2021calibration} suggests that calibrating confidence across multiple domains is beneficial to OOD generalization~\cite{zhou2021domain}.
Studies on why neural networks produce overconfident predictions have also been covered in the literature. In~\cite{hein2019relu}, the authors suggest that ReLU neural networks are essentially piecewise linear functions, thus explaining why OOD data can easily cause softmax classifiers to generate highly confident output. In~\cite{nalisnick2019do}, the authors identify that data variance and model curvature cause most generative models to assign high density to OOD data. The authors in~\cite{padhy2020revisiting} point out that the overconfidence issue is related to the closed-world assumption in softmax, and design a distance-based one-vs-all (OvA) classifier as the countermeasure.

Two works related to ours are JEM~\cite{grathwohl2019your} and the OvA classifier~\cite{padhy2020revisiting}. Compared with JEM, our approach is much easier to train because we only need to optimize a \emph{single} classification objective to achieve both discriminative classifier learning and generative modeling (see Sec.~\ref{sec:method;subsec:theory}), while JEM has to simultaneously optimize two separate objectives. Moreover, JEM has ignored the closed-world softmax issue, which is addressed in this work with an augmented softmax. Compared with the OvA classifier, our approach is significantly different: we endow the classifier with the ability to model open-world uncertainty, which is attributed to the extra dimension in softmax learned via a novel energy-based objective function to capture the marginal data distribution; in contrast, the OvA classifier converts the $K$-way classification problem into multiple binary classification problems.

\keypoint{Energy-Based Models (EBMs)}
have been widely used in the area of generative modeling~\cite{zhao2017energy,gao2021learning,arbel2021generalized,che2020your}. The basic idea in EBMs is to learn dependencies between variables (e.g., images and labels) represented using energy functions; and to assign low energies to correct configurations while give high energies to incorrect ones~\cite{lecun2006tutorial}. However, training EBMs, especially on high-dimensional data like images, has been notoriously hard due to sampling issues~\cite{kim2016deep,grathwohl2021no}. A widely used sampler is Stochastic Gradient Langevin Dynamics (SGLD)~\cite{welling2011bayesian}, which injects noises to the parameter update and anneals the stepsize during the course of training. Following prior work~\cite{nijkamp2019learning,du2019implicit,grathwohl2019your}, we also leverage SGLD to optimize our energy-based objective function.


\begin{figure*}[t]
    \centering
    \includegraphics[width=0.98\textwidth]{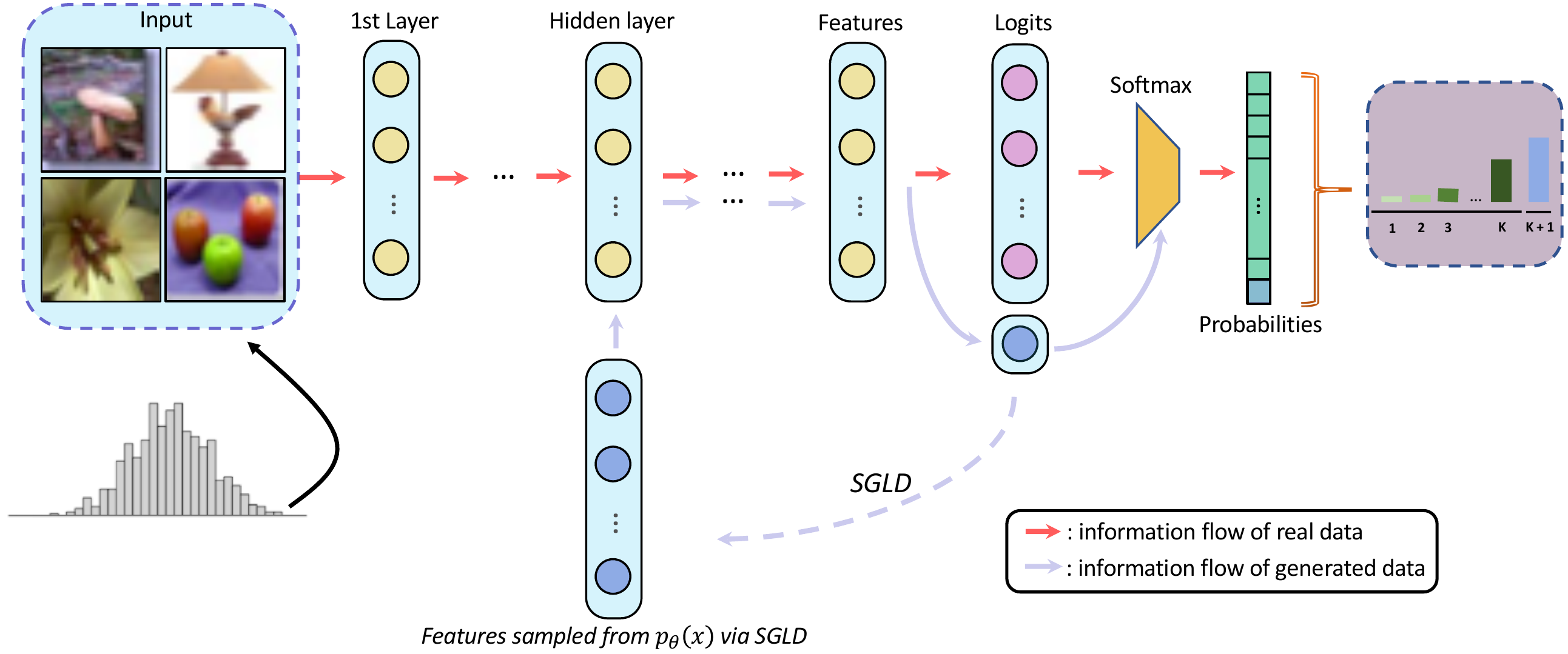}
    \caption{Model architecture of our approach \emph{EOW-Softmax}. The extra dimension introduced in the augmented softmax (dashed) is learned using an energy-based function to model open-world uncertainty, such that it can assign high uncertainty scores to abnormal input far away from the training data distribution, which in turn lower the classifier's confidence on the original $K$-way classification task. Note that the sampling in SGLD is performed on the latent (feature) space rather than the input (image) space.}
    \label{fig:ebm_framework}
\end{figure*}

\section{Methodology}\label{sec:method}
According to ~\cite{padhy2020revisiting}, we argue that the culprit for causing the overconfidence problem in most neural network classifiers' output is the closed-world nature in softmax. As a result, a model trained by the cross-entropy loss has to pick one of $K$ pre-defined categories with high confidence. To overcome this problem, we propose \emph{Energy-based Open-World Softmax}, or EOW-Softmax, to regularize the $K$-way classification scores in such a way that allows their confidence to be calibrated. The main idea in EOW-Softmax is to re-formulate the original $K$-way classification task as a novel $K$+1-way classification problem, where the extra dimension is designed to model \emph{open-world uncertainty}. To learn such a $K$+1-way classifier in an end-to-end manner, we propose a novel energy-based objective function, which essentially forces the extra dimension to be negatively correlated to the marginal data distribution. In this way, the $K$ classification scores are automatically calibrated to be less confident over input fallen beyond the training data distribution. See Figure~\ref{fig:ebm_framework} for an overview of our model architecture. The rest of this section are organized as follows. In Sec.~\ref{sec:method;subsec:bg_ebms}, we provide a brief background on energy-based models (EBMs), which are required by our approach to construct the objective function. In Sec.~\ref{sec:method;subsec:eow_softmax}, we discuss in detail the design of EOW-Softmax. Sec.~\ref{sec:method;subsec:theory} gives a theoretical insight on why EOW-Softmax can help calibrate a model's confidence estimates.

\subsection{A Brief Background on EBMs}\label{sec:method;subsec:bg_ebms}
The main building block in EBMs is an energy function $E_{\theta}: \mathbb{R}^D \to \mathbb{R}$ (parameterized by $\theta$), which aims to map a $D$-dimensional datapoint to a scalar.\footnote{When the input is an image, $D$ can be understood as the length of the flattened tensor.} The learning is designed in such a way that $E_{\theta}$ can assign low energies to observed configurations of variables while give high energies to unobserved ones~\cite{lecun2006tutorial}. With $E_{\theta}$, any probability density $p(x)$ for $x \in \mathbb{R}^D$ in an EBM can be written as
\begin{equation} \label{eq:ebm_density}
p_{\theta}(x) = \frac{\exp(- E_{\theta}(x))}{Z(\theta)},
\end{equation}
where $Z(\theta) = \int_x \exp(- E_{\theta}(x))$ denotes the normalizing constant, also known as the partition function. An EBM can be represented by using any function as long as the function can generate a single scalar given some input $x$. In this work, we assume $E_{\theta}$ is represented by a deep neural network. People usually adopt gradient estimation to optimize an EBMs~\cite{kim2016deep,grathwohl2021no} and sample data from it by Markov Chain Monte Carlo (MCMC) methods~\cite{geyer1991markov, geyer1992practical, hinton2002training, welling2011bayesian}.


\subsection{Energy-Based Open-World Softmax}\label{sec:method;subsec:eow_softmax}

\keypoint{Open-World Softmax}
As discussed before, conventional softmax-based classifiers lack the ability to model open-world uncertainty. To address this problem, we design a neural network classifier to output probabilities on $K+1$ categories (see Figure~\ref{fig:ebm_framework}), with the $K$+1-th score representing open-world uncertainty---the network should be able to produce high uncertainty scores to abnormal input, which in turn can lower the confidence on the original $K$ categories' prediction. Let $f_\theta: \mathbb{R}^D \to \mathbb{R}^{K+1}$ be our neural network model (excluding the softmax layer), which produces $K+1$ logits, and $f_\theta(x)[i]$ the $i$-th logit given input $x$, with $i \in \{ 1, ..., K, K+1 \}$. The output probabilities can then be obtained by passing these $K+1$ logits to a softmax normalization layer, i.e.
\begin{equation}
h_\theta(x)[i] = \frac{\exp(f_\theta(x)[i])}{\sum_{j=1}^{K+1} \exp(f_\theta(x)[j])},
\end{equation}
where $h_\theta$ is the combination of the neural network $f_\theta$ and the softmax normalization layer.

\keypoint{Energy-Based Learning Objective}
Now the question is how to design a learning objective that allows $h_\theta(x)[K+1]$ to encode uncertainty? Our idea here is to associate the score of $h_\theta(x)[K+1]$ to the marginal data distribution. Intuitively, when the input comes from within the training data distribution $p(x)$, the model is supposed to be confident in its decision, and therefore, $h_\theta(x)[K+1]$ should be low (conversely, $\sum_{i=1}^K h_\theta(x)[i]$ should be high). If the input deviates from the training data distribution, the model should become uncertain about whether its decision is correct. In this case, $h_\theta(x)[K+1]$ should be high to indicate a higher level of uncertainty, which naturally forces $\sum_{i=1}^K h_\theta(x)[i]$ to stay low (due to the softmax normalization). However, directly training $h_\theta(x)[K+1]$ to capture the marginal distribution $p(x)$ (i.e.~generative modeling) is difficult~\cite{nalisnick2019do}. Instead, we propose a novel learning objective with the help of EBMs~\cite{lecun2006tutorial}. First, we define our energy function as
\begin{equation}\label{eq:our_energy}
E_\theta(x) = \log h_\theta(x)[K+1].
\end{equation}
Then, our energy-based objective function is defined as
\begin{equation}\label{eq:our_obj}
\min\limits_{\theta}
\mathbb{E}_{p(x)} \bigg[- \log h_\theta(x)[y] \bigg] + \lambda \mathbb{E}_{p_{\bar{\theta}}(x)} \bigg[- \log h_\theta(x)[K+1] \bigg],
\end{equation}
where $\lambda > 0$ is a hyper-parameter; the first term is the maximum log-likelihood objective for the $K$-way classification task using the ground-truth label $y$; the second term can also be seen as maximum log-likelihood objective---for recognizing data sampled from $p_{\bar{\theta}}(x)$. Note that $p_{\bar{\theta}}(x)$ denotes the model distribution with frozen parameters of $p_{\theta}(x)$ of current iteration ($p_{\bar{\theta}}(x)$ will always be the same as $p_{\theta}(x)$ but without the gradient calculation of $\theta$ since parameter  $\theta$ is frozen here and $\bar{\theta}$ should be regarded as a constant in each updates). We will show later in Sec.~\ref{sec:method;subsec:theory} that optimizing Eq.~\eqref{eq:our_obj} can actually lead the summation of rest $K$ softmax scores of original classes to be directly proportional to the marginal density $p(x)$, which in turn can make the $K+1$-th softmax score be negatively correlated to $p(x)$.




\keypoint{SGLD-Based Optimization}
We approximate the expectation in the second term in Eq.~\eqref{eq:our_obj} using a sampler based on Stochastic Gradient Langevin Dynamics (SGLD)~\cite{welling2011bayesian}. Specifically, the SGLD sampling process follows
\begin{equation}\label{eq:sgld}
z_{t+1} = z_t - \frac{\alpha}{2} \frac{\partial E_\theta(z_t)}{\partial z_t} + \sqrt{\alpha} \epsilon, \quad \epsilon \sim \mathcal{N}(0, I),
\end{equation}
where $t$ denotes the SGLD iteration, $\alpha$ the step-size, and $\epsilon$ a random noise sampled from a normal distribution. In practice, $\alpha$ is usually fixed as a constant.

Most SGLD-based methods draw samples from the image space. This forces the information to flow through the entire neural network, which is computationally expensive. Inspired by~\cite{che2020your}, we choose to draw samples from the latent space (see Figure~\ref{fig:ebm_framework}). Therefore, $z$ in Eq.~\eqref{eq:sgld} represents features rather than an image. Such a design significantly accelerates the training since the information only goes partially through the network model, which allows much deeper architectures such as ResNet-101~\cite{he2016deep} to fit into limited resources. Moreover, the latent space is typically smoother than the image space~\cite{bengio2013}, which facilitates the estimate of gradients. 

\subsection{Theoretical Insight}\label{sec:method;subsec:theory}
In order to prove that our objective can force $h_\theta(x)[K+1]$ to be negatively correlated with $p(x)$, i.e.~representing uncertainty, we tend to show that in theory, optimizing our objective in Eq.~\eqref{eq:our_obj} is equivalent to minimize the KL divergence between $p(x)$ and another EBM-modeled distribution $q_\theta(x)$\footnote{It is worth noting that this $q_\theta(x)$ is not the modeled EBMs $p_\theta(x)$ in Sec.~\ref{sec:method;subsec:eow_softmax}.}, where $q_\theta(x)$ is defined by energy function
\begin{equation}\label{eq:energy_prime}
E'_\theta(x) = -\log \sum\limits_{i=1}^{K} h_\theta(x)[i].
\end{equation}

To this end, we introduce an extra objective
\begin{equation}\label{eq:our_obj_v2}
\min\limits_{\theta}
\mathbb{E}_{p(x)} \bigg[- \log h_\theta(x)[y] \bigg] + \mathbb{E}_{q_{\bar{\theta}}(x)} \bigg[ \log \sum\limits_{i=1}^{K} h_\theta(x)[i] \bigg],
\end{equation}

%
Similar to Eq.~\eqref{eq:our_obj}, here $\bar{\theta}$ means the parameters are frozen. We will show that optimizing Eq.~\eqref{eq:our_obj} essentially optimize Eq.~\eqref{eq:our_obj_v2} and optimizing Eq.~\eqref{eq:our_obj_v2} is an equivalent of $\min D_{KL}(p(x) || q_\theta(x))$ in following Proposition~\ref{prop:equivalet_obj_objv2} and Theorem~\ref{thm:main}, respectively.

\begin{proposition}
\label{prop:equivalet_obj_objv2}
Given two EBMs $p_{\bar{\theta}}(x)$ and $q_{\bar{\theta}}(x)$ with energy functions defined in Eqs.~\eqref{eq:our_energy} and ~\eqref{eq:energy_prime} respectively, the optimization of Eq.~\eqref{eq:our_obj} is actually equivalent to optimize a combination of one $K$-way classification objective and Eq.~\eqref{eq:our_obj_v2} with some suitable coefficient $\mu$.
\end{proposition}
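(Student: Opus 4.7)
The plan is to verify Proposition~\ref{prop:equivalet_obj_objv2} by working at the level of training gradients under the stop-gradient convention $\bar{\theta}=\theta$ that the paper adopts; in this convention, ``equivalent optimization'' means the two objectives produce proportional update directions after matching corresponding terms. The high-level strategy is to split the $K{+}1$-way cross-entropy in Eq.~\eqref{eq:our_obj} multiplicatively so that a pure $K$-way classifier loss falls out, and then to rewrite the remaining energy term as a scalar multiple of the negative-phase term of Eq.~\eqref{eq:our_obj_v2}.

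First I would decompose the augmented cross-entropy. Since $h_\theta(x)[y]=h'_\theta(x)[y]\cdot\sum_{i=1}^{K}h_\theta(x)[i]$, where $h'_\theta$ denotes the ordinary softmax over only the first $K$ logits, we have $-\log h_\theta(x)[y] = -\log h'_\theta(x)[y] - \log\sum_{i=1}^{K}h_\theta(x)[i]$. Substituting this identity into Eq.~\eqref{eq:our_obj} isolates the pure $K$-way classification objective $\mathbb{E}_{p(x)}[-\log h'_\theta(x)[y]]$ and leaves a residual term $-\mathbb{E}_{p(x)}[\log\sum_i h_\theta(x)[i]]$. Applying the same identity to the first summand of Eq.~\eqref{eq:our_obj_v2} shows that this residual is exactly the positive-phase piece implicit inside Eq.~\eqref{eq:our_obj_v2}, so it is available to be ``donated'' to the Eq.~\eqref{eq:our_obj_v2} half of the claimed decomposition.

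Second, I would match the remaining $\lambda\,\mathbb{E}_{p_{\bar{\theta}}(x)}[-\log h_\theta(x)[K+1]]$ in Eq.~\eqref{eq:our_obj} against the negative-phase term $\mathbb{E}_{q_{\bar{\theta}}(x)}[\log\sum_i h_\theta(x)[i]]$ of Eq.~\eqref{eq:our_obj_v2}. Using $h_\theta[K+1]=1-\sum_i h_\theta[i]$ gives the pointwise gradient identity $\nabla_\theta(-\log h_\theta[K+1]) = \frac{1-h_\theta[K+1]}{h_\theta[K+1]}\nabla_\theta\log\sum_i h_\theta[i]$. Combining this with a change of measure from $p_{\bar{\theta}}(x)\propto 1/h_{\bar{\theta}}[K+1]$ (the EBM induced by Eq.~\eqref{eq:our_energy}) to $q_{\bar{\theta}}(x)\propto 1-h_{\bar{\theta}}[K+1]$ (the EBM induced by Eq.~\eqref{eq:energy_prime}), whose density ratio is $\tfrac{Z_q}{Z_p}\cdot\tfrac{1}{h_{\bar{\theta}}[K+1](1-h_{\bar{\theta}}[K+1])}$, and evaluating at $\bar{\theta}=\theta$ so that the $1-h_\theta[K+1]$ factors cancel, reduces the gradient of $\lambda\,\mathbb{E}_{p_{\bar{\theta}}}[-\log h_\theta[K+1]]$ to the form $\mu\,\mathbb{E}_{q_{\bar{\theta}}}[\nabla_\theta\log\sum_i h_\theta[i]]$, with suitable coefficient $\mu=\lambda Z_q/Z_p$. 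Collecting the two steps yields $\nabla_\theta\,$Eq.~\eqref{eq:our_obj} $=\nabla_\theta\,\mathbb{E}_p[-\log h'_\theta[y]]+\mu\,\nabla_\theta\,$Eq.~\eqref{eq:our_obj_v2}, which is the desired equivalence.

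The main obstacle is that the change of measure does not collapse perfectly: a residual data-dependent factor of $1/h_\theta[K+1]^2$ survives after cancellation, so a strict algebraic identity between Eq.~\eqref{eq:our_obj} and $K$-way classification plus $\mu\cdot$ Eq.~\eqref{eq:our_obj_v2} does not hold pointwise. Closing the gap cleanly requires either (i) a mild boundedness/smoothness assumption on $h_\theta[K+1]$ on the support of $q_{\bar{\theta}}$ so that the residual weighting can be absorbed into a single scalar, or (ii) a stationary-point argument exploiting that both the positive-phase term $-\mathbb{E}_p[\log\sum_i h_\theta[i]]$ and the reweighted negative-phase term vanish simultaneously at the target configuration $\sum_i h_\theta[i]\propto p(x)$, so that $\mu$ only needs to be chosen to match their relative scale. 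Positivity of $\mu$, required so that Eq.~\eqref{eq:our_obj_v2} is minimized in the correct direction, is immediate from $\lambda,Z_p,Z_q>0$; it is the control over the residual weighting that is the delicate step, and the one I would expect to take most of the proof's work.
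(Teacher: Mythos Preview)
Your route differs substantially from the paper's and is more involved than it needs to be. The paper does not decompose the first term of Eq.~\eqref{eq:our_obj} via $h_\theta(x)[y]=h'_\theta(x)[y]\sum_{i}h_\theta(x)[i]$; it simply observes that the first terms of Eqs.~\eqref{eq:our_obj} and~\eqref{eq:our_obj_v2} are literally identical and treats this shared term as the ``$K$-way classification objective'' referred to in the proposition. All of the work is on the second terms. There the paper writes $\nabla_\theta\,\mathbb{E}_{q_{\bar\theta}}\bigl[\log\sum_i h_\theta[i]\bigr]$ as an integral, uses the cancellation $q_{\bar\theta}(x)\cdot\bigl(\sum_i h_\theta[i]\bigr)^{-1}=1/Z'(\theta)$ (since $q_{\bar\theta}\propto\sum_i h_\theta[i]$ at $\bar\theta=\theta$) to obtain $\tfrac{1}{Z'}\int_x\nabla_\theta\sum_i h_\theta[i]$, applies the softmax constraint $\nabla_\theta\sum_i h_\theta[i]=-\nabla_\theta h_\theta[K{+}1]$, and then multiplies and divides by $h_\theta[K{+}1]$ to repackage the integrand as $p_{\bar\theta}(x)\,\nabla_\theta\log h_\theta[K{+}1]$. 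This delivers an exact proportionality with $\mu=Z(\theta)/Z'(\theta)$ between the two second-term gradients, with no leftover $x$-dependent weighting.

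The residual $1/h_\theta[K{+}1]^2$ you encounter is an artifact of attempting a density-ratio change of measure between $p_{\bar\theta}$ and $q_{\bar\theta}$ rather than letting the denominator of the log-derivative cancel against the unnormalized EBM density directly under the integral, as the paper does. That said, the paper's step identifying $h_\theta(x)[K{+}1]/Z(\theta)$ with $p_{\bar\theta}(x)$ tacitly uses $p_\theta\propto h_\theta[K{+}1]$, i.e.\ $E_\theta=-\log h_\theta[K{+}1]$, which is the opposite sign from Eq.~\eqref{eq:our_energy} as stated. Under that sign convention your change-of-measure computation also closes cleanly (the extra $1/h_\theta[K{+}1]^2$ disappears), so the ``obstacle'' you flag is really a symptom of this sign inconsistency in the paper rather than a genuine gap in the argument; neither of your proposed workarounds (i) or (ii) is needed.
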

\begin{proof}
Since the first optimization term in Eq.~\eqref{eq:our_obj_v2} is identical with Eq.~\eqref{eq:our_obj} as well as the objective of maximum log-likelihood of $K$-way classification problems, we only need to consider the second terms in both equations and prove that they are equivalent to each other. Specifically, for the gradient of the second term of Eq.~\eqref{eq:our_obj_v2}, we have\footnote{The equality between Eqs.~\eqref{eq:prop_2} and \eqref{eq:prop_3} holds because $q_{\bar{\theta}}(x) = q_{\theta}(x) = \frac{\sum\limits_{i=1}^{K} h_\theta(x)[i]}{Z'(\theta)}$ and $\frac{\partial}{\partial \theta} \log \sum\limits_{i=1}^{K} h_\theta(x)[i] = \frac{\frac{\partial}{\partial \theta} \sum\limits_{i=1}^{K} h_\theta(x)[i] }{\sum\limits_{i=1}^{K} h_\theta(x)[i]}$.}

\begin{align}
\label{eq:prop_1}
& \frac{\partial}{\partial \theta} \ \mathbb{E}_{ q_{\bar{\theta}}(x)} \Big[ \log \sum\limits_{i=1}^{K} h_\theta(x)[i] \Big] \\
= \ & \int_x q_{\bar{\theta}}(x) \frac{\partial}{\partial \theta} \log \sum\limits_{i=1}^{K} h_\theta(x)[i] \label{eq:prop_2} \\
= \ & \int_x \frac{\sum\limits_{i=1}^{K} h_\theta(x)[i]}{Z'(\theta)} \cdot \frac{\frac{\partial}{\partial \theta} \sum\limits_{i=1}^{K} h_\theta(x)[i] }{\sum\limits_{i=1}^{K} h_\theta(x)[i]} \label{eq:prop_3} \\
= \ & \frac{1}{Z'(\theta)} \int_x \frac{\partial}{\partial \theta} \sum\limits_{i=1}^{K} h_\theta(x)[i] \label{eq:prop_4} \\
= \ & -\frac{1}{Z'(\theta)} \int_x \frac{\partial}{\partial \theta} h_\theta(x)[K+1] \label{eq:prop_5} \\
= \ & -\frac{Z(\theta)}{Z'(\theta)} \int_x \frac{h_\theta(x)[K+1]}{Z(\theta)} \cdot \frac{\frac{\partial}{\partial \theta} h_\theta(x)[K+1]}{h_\theta(x)[K+1]} \label{eq:prop_6} \\
= \ & -\frac{Z(\theta)}{Z'(\theta)} \int_x p_{\bar{\theta}}(x)\frac{\partial}{\partial \theta} \log h_\theta(x)[K+1], \label{eq:prop_7}
\end{align}
where $Z'(\theta)$ and $Z(\theta)$ represents the partition functions of $q_\theta(x)$ and $p_\theta(x)$ respectively. 
If we use $\mu$ to denote $\frac{Z(\theta)}{Z'(\theta)}$, we can restate the Eq.~\eqref{eq:prop_7} as
$$
\mu \frac{\partial}{\partial \theta} \mathbb{E}_{p_{\bar{\theta}}(x)} \bigg[- \log h_\theta(x)[K+1] \bigg],
$$
which is exactly the gradient of Eq.~\eqref{eq:our_obj}'s second term. As a result, the objective of Eq.~\eqref{eq:our_obj} and of Eq.~\eqref{eq:our_obj_v2} are same under a suitable coefficient $\mu$ in each iteration. Further more, remembering that the first term of Eq.~\eqref{eq:our_obj} is an equivalent of the objective of maximum log-likelihood of $K$-way classification problems, we consequently conclude that optimizing our objective of Eq.~\eqref{eq:our_obj} essentially optimize Eq.~\eqref{eq:our_obj_v2} and a $K$-way classification objective.
\end{proof}

According to Proposition~\ref{prop:equivalet_obj_objv2}, we know that our learning objective of Eq.~\eqref{eq:our_obj} can optimize the discriminative $K$-way classification objective and the generative modeling objective in a unified discriminative objective, which has never been explored in existing confidence calibration work. 

Moreover, if we can further prove that optimizing Eq.~\eqref{eq:our_obj_v2} is \textit{de facto} an equivalent of minimizing the KL-divergence between $p(x)$ and the distribution $q_\theta(x)$, then our objective of Eq.~\eqref{eq:our_obj} could minimize this KL-divergence either due to Proposition~\ref{prop:equivalet_obj_objv2}. To prove that, we need to recur to Lemma \ref{lem:kl_grad_ebm} based on~\cite{kim2016deep}, which shows how to efficiently compute the gradient of the KL divergence between the real distribution and an approximated distribution modeled via EBMs.
\begin{lemma}\label{lem:kl_grad_ebm}
\vspace{-2mm}
Given a training dataset with data $\{x\}$ sampled from distribution $r(x)$, and an energy model distribution $r_\phi(x)$ parameterized by $\phi$ and associated to an energy function $E_\phi(x)$, the objective of minimizing $D_{KL}(r(x) || r_\phi(x))$ can be optimized by descending the following gradient w.r.t $\phi$,
\begin{equation}\label{eq:kl_grad_ebm}
\mathop{\mathbb{E}}\limits_{x^+ \sim r(x)} \Bigg[ \frac{\partial E_\phi(x^+)}{\partial \phi} \Bigg] - \mathop{\mathbb{E}}\limits_{x^- \sim r_\phi(x)} \Bigg[\frac{\partial E_\phi(x^-)}{\partial \phi}\Bigg].
\end{equation}
\end{lemma}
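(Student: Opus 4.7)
The plan is to start from the definition of the KL divergence and differentiate it directly with respect to $\phi$. Writing
$D_{KL}(r(x) \| r_\phi(x)) = \int r(x) \log r(x)\,dx - \int r(x) \log r_\phi(x)\,dx$,
the first integral does not depend on $\phi$, so the gradient reduces to $-\mathbb{E}_{r(x)}[\nabla_\phi \log r_\phi(x)]$. Using the EBM parameterization of Eq.~\eqref{eq:ebm_density}, namely $r_\phi(x) = \exp(-E_\phi(x))/Z(\phi)$, I would expand $\log r_\phi(x) = -E_\phi(x) - \log Z(\phi)$, which splits the gradient into a ``positive phase'' term $\mathbb{E}_{x^+ \sim r(x)}[\partial_\phi E_\phi(x^+)]$ coming from the energy on data samples, plus a residual involving $\nabla_\phi \log Z(\phi)$.

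The key step, and the only one that is not bookkeeping, is to rewrite $\nabla_\phi \log Z(\phi)$ as an expectation under the model distribution $r_\phi(x)$. I would differentiate $Z(\phi) = \int \exp(-E_\phi(x))\,dx$ under the integral sign to obtain
$\nabla_\phi Z(\phi) = -\int \exp(-E_\phi(x))\,\nabla_\phi E_\phi(x)\,dx$,
so that
$\nabla_\phi \log Z(\phi) = \frac{1}{Z(\phi)}\nabla_\phi Z(\phi) = -\mathbb{E}_{x^- \sim r_\phi(x)}\bigl[\partial_\phi E_\phi(x^-)\bigr]$.
Substituting this identity back yields the claimed ``negative phase'' term with the opposite sign, combining with the positive phase to give exactly Eq.~\eqref{eq:kl_grad_ebm}.

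The main (and essentially only) obstacle is the interchange of $\nabla_\phi$ with the integral defining $Z(\phi)$; I would handle it by appealing to standard regularity assumptions on $E_\phi$ (continuous differentiability in $\phi$ and integrability of $\exp(-E_\phi(x))|\partial_\phi E_\phi(x)|$ locally uniformly in $\phi$), which are implicitly assumed throughout the EBM literature cited here. After that, the derivation is a short chain of equalities, and the final expression matches the stated contrastive-divergence-style gradient.
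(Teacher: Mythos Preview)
Your derivation is correct and is precisely the standard maximum-likelihood/contrastive-divergence computation for EBMs: drop the $\phi$-independent entropy term, expand $\log r_\phi(x) = -E_\phi(x) - \log Z(\phi)$, and use the identity $\nabla_\phi \log Z(\phi) = -\mathbb{E}_{r_\phi}[\nabla_\phi E_\phi]$ to obtain the positive/negative phase decomposition. The paper itself does not supply a proof of this lemma at all; it simply writes ``The proof can refer~\cite{kim2016deep}'' and moves on, so there is no in-paper argument to compare against. Your write-up is exactly the computation one finds in that reference (and in essentially every EBM tutorial), so there is nothing to add beyond noting that the regularity caveat you flag is indeed the only non-formal step.
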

The proof can refer ~\cite{kim2016deep}. By descending the gradient in Eq.~\eqref{eq:kl_grad_ebm}, the first term decreases the energy of samples $x^+$ drawn from the data distribution, while the second term increases the energy of samples $x^-$ drawn from the energy model distribution. Based on Lemma~\ref{lem:kl_grad_ebm}, we can introduce following theorem followed by a proof.
\begin{theorem}
\label{thm:main}
Let $p(x)$ denote the training data distribution, and $q_\theta(x)$ the energy model distribution represented by the energy function $E'_\theta(x)$ defined in Eq.~\eqref{eq:energy_prime}, we can achieve minimization of $D_{KL}(p(x) || q_\theta(x))$ by optimizing Eq.~\eqref{eq:our_obj_v2}.
\end{theorem}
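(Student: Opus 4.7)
The plan is to show that the gradient of the objective in Eq.~\eqref{eq:our_obj_v2} with respect to $\theta$ decomposes into (i) a pure $K$-way classification gradient, plus (ii) exactly the gradient prescribed by Lemma~\ref{lem:kl_grad_ebm} for minimizing $D_{KL}(p(x)\,\|\,q_\theta(x))$ under the energy $E'_\theta$. Since descending the latter is precisely what the lemma guarantees drives $D_{KL}$ down, optimizing Eq.~\eqref{eq:our_obj_v2} must likewise minimize that KL divergence.

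First, I would rewrite the first term of Eq.~\eqref{eq:our_obj_v2} through an algebraic identity. Define the renormalized $K$-way conditional $p_\theta(y\mid x) := h_\theta(x)[y] / \sum_{i=1}^K h_\theta(x)[i]$ for $y\in\{1,\dots,K\}$. Then the straightforward identity
\begin{equation*}
-\log h_\theta(x)[y] \;=\; -\log p_\theta(y\mid x) \;+\; E'_\theta(x),
\end{equation*}
splits the supervised cross-entropy term into a $K$-way classification loss and a term $\mathbb{E}_{p(x)}[E'_\theta(x)]$, whose gradient is $\mathbb{E}_{x^+\sim p(x)}[\partial_\theta E'_\theta(x^+)]$. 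This produces exactly the positive-phase gradient in Eq.~\eqref{eq:kl_grad_ebm}.

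Next, I would differentiate the second term of Eq.~\eqref{eq:our_obj_v2}, treating $q_{\bar\theta}$ as a frozen sampling distribution (the bar notation means no gradient flows through it), and observe
\begin{equation*}
\frac{\partial}{\partial\theta}\,\mathbb{E}_{q_{\bar\theta}(x)}\!\Big[\log\sum_{i=1}^{K}h_\theta(x)[i]\Big] \;=\; -\,\mathbb{E}_{x^-\sim q_\theta(x)}\!\Big[\frac{\partial E'_\theta(x^-)}{\partial\theta}\Big],
\end{equation*}
which is precisely the negative-phase gradient in Eq.~\eqref{eq:kl_grad_ebm}. Summing the two terms gives the classification gradient plus the full KL-gradient from Lemma~\ref{lem:kl_grad_ebm}, so invoking the lemma concludes the proof.

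The main obstacle I anticipate is justifying the stop-gradient step cleanly: one must argue that, even though $q_{\bar\theta}$ depends on $\theta$ through the partition function $Z'(\theta)$, freezing $\bar\theta$ at each iteration means we treat $q_{\bar\theta}$ as a constant distribution when differentiating, so the Jacobian of the sampling measure does not contribute. A subtlety is that Lemma~\ref{lem:kl_grad_ebm} supplies the KL gradient under the genuine $q_\theta$, but because we sample from $q_{\bar\theta}$ with $\bar\theta=\theta$ at the current step, the two gradients coincide at every update, which is what licenses calling the procedure a descent on $D_{KL}(p(x)\,\|\,q_\theta(x))$. The remaining steps are routine manipulations of log-sums and expectations.
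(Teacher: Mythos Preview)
Your argument is correct and in fact sharper than the paper's own proof. The paper does \emph{not} decompose the gradient; instead it bounds the objective. Concretely, the paper writes out the KL-descent surrogate $\mathbb{E}_{p(x)}[E'_\theta(x)] - \mathbb{E}_{q_{\bar\theta}(x)}[E'_\theta(x)]$ (whose gradient matches Lemma~\ref{lem:kl_grad_ebm}), and then uses the pointwise inequality $\sum_{i=1}^{K}h_\theta(x)[i]\ge h_\theta(x)[y]$ to show that Eq.~\eqref{eq:our_obj_v2} is an \emph{upper bound} of this surrogate; the theorem is concluded by appealing to minimization of this upper bound. Your route instead introduces the renormalized conditional $p_\theta(y\mid x)=h_\theta(x)[y]/\sum_{i=1}^K h_\theta(x)[i]$ and exploits the exact identity $-\log h_\theta(x)[y] = -\log p_\theta(y\mid x) + E'_\theta(x)$ to show that the \emph{gradient} of Eq.~\eqref{eq:our_obj_v2} is precisely a $K$-way classification gradient plus the full contrastive-divergence gradient of Lemma~\ref{lem:kl_grad_ebm}. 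This buys you a stronger conclusion: descending Eq.~\eqref{eq:our_obj_v2} performs exact gradient descent on $D_{KL}(p\,\|\,q_\theta)$ (modulo the added classification term), whereas the paper's upper-bound argument does not by itself guarantee that the minimizer of the bound minimizes the KL. Your discussion of the stop-gradient subtlety ($q_{\bar\theta}=q_\theta$ at each iterate so the sampled negative-phase gradient coincides with Lemma~\ref{lem:kl_grad_ebm}) is also exactly the point the paper handles, and you treat it correctly.
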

\begin{proof}
According to Lemma~\ref{lem:kl_grad_ebm}, the KL divergence between $p(x)$ and $q_\theta(x)$ can be optimized by descending the gradient in Eq.~\eqref{eq:kl_grad_ebm}. We can replace the need of computing expectation over the parameterized density $q_\theta(x)$ in Eq.~\eqref{eq:kl_grad_ebm} by fixing the parameters $\theta$, denoted by $q_{\bar{\theta}}(x)$ \footnote{only the numerical value of Eq.~\eqref{eq:kl_grad_ebm} matters in optimization process.}. As such, Eq.~\eqref{eq:kl_grad_ebm} is converted to
\begin{equation}\label{eq:kl_grad_ebm_fixedparam}
\mathbb{E}_{p(x)} \Bigg[ \frac{\partial E'_\theta(x)}{\partial \theta} \Bigg] - \mathbb{E}_{q_{\bar{\theta}}(x)} \Bigg[ \frac{\partial E'_\theta(x)}{\partial \theta} \Bigg].
\end{equation}
Now we can optimize $D_{KL}(p(x) || q_\theta(x))$ via an objective $\mathbb{E}_{p(x)} \Big[ E'_\theta(x) \Big] - \mathbb{E}_{q_{\bar{\theta}}(x)} \Big[ E'_\theta(x) \Big]$
which holds a numerically same gradient with regard to $\theta$ as Eqs.~\eqref{eq:kl_grad_ebm_fixedparam} and ~\eqref{eq:kl_grad_ebm}. Then we have\footnote{the inequality holds because of the fact $\forall y \leq K, \ \ \sum\limits_{i=1}^{K} h_\theta(x)[i] \geq h_\theta(x)[y].$}
\begin{equation}
\label{eq:final}
\begin{aligned}
&\mathbb{E}_{p(x)} \Bigg[ E'_\theta(x) \Bigg] - \mathbb{E}_{q_{\bar{\theta}}(x)} \Bigg[ E'_\theta(x) \Bigg] \\
= \ &\mathbb{E}_{p(x)} \Bigg[ -\log \sum\limits_{i=1}^{K} h_\theta(x)[i] \Bigg] + \mathbb{E}_{q_{\bar{\theta}}(x)} \Bigg[ \log \sum\limits_{i=1}^{K} h_\theta(x)[i] \Bigg] \\
\leq \ &\mathbb{E}_{p(x)} \Bigg[ -\log h_\theta(x)[y] \Bigg] + \mathbb{E}_{q_{\bar{\theta}}(x)} \Bigg[ \log \sum\limits_{i=1}^{K}h_\theta(x)[i] \Bigg].
\end{aligned}
\end{equation}
Therefore, Eq.~\eqref{eq:our_obj_v2} is an upper bound of an equivalent variant of the KL-divergence between $p(x)$ and $q_\theta(x)$. 
\end{proof}
Combing Proposition~\ref{prop:equivalet_obj_objv2} and Theorem~\ref{thm:main}, we can conclude that our objective of Eq.~\eqref{eq:our_obj} can minimize $D_{KL}(p(x) || q_\theta(x))$. Therefore, once our objective converged, we obtain $p(x) \simeq q_\theta(x) = \frac{\exp(-E'_\theta(x))}{Z'(\theta)} \propto \sum\limits_{i=1}^{K} h_\theta(x)[i]$, which result in the summation of $K$ softmax scores of original classes to be directly proportional to the marginal density $p(x)$ and in turn make the $K+1$-th softmax score be negatively correlated to $p(x)$.


\begin{table*}[t]
\centering\small 
\caption{Comparison between our approach EOW-Softmax and nine baselines on four benchmark datasets. It is clear that our approach generally leads to a better calibrated model than the baselines (lower ECE \& NLL), while maintaining the accuracy. $\uparrow$: the higher the better. $\downarrow$: the lower the better.}
\label{tab:4_main_benchmarks}
\resizebox{\linewidth}{!}{%
\begin{tabular}{c |ccc| ccc |ccc| ccc}
\toprule
\multirow{2}{*}{Method} & \multicolumn{3}{c|}{\centering MNIST (MLP)} & \multicolumn{3}{c|}{\centering CIFAR10 (VGG11)} & \multicolumn{3}{c|}{\centering CIFAR100 (ResNet50)} & \multicolumn{3}{c}{\centering Tiny-ImageNet (ResNet50)} \\
\cline{2-13}
 &  Acc\% $\uparrow$ &  ECE\% $\downarrow$ & NLL $\downarrow$ &  Acc\% $\uparrow$ &  ECE\% $\downarrow$ & NLL $\downarrow$&  Acc\% $\uparrow$ &  ECE\% $\downarrow$ & NLL $\downarrow$&  Acc\% $\uparrow$ &  ECE\% $\downarrow$ & NLL $\downarrow$ \\
\hline
Vanilla Training & 98.32 & 1.73 & 0.29 & 90.48 & 6.30 & 0.43 & 71.57 & 19.1 & 1.58 & 46.71 & 25.2 & 2.95 \\
TrustScore~\cite{TrustScore} & 98.32 & 2.14 & 0.26 & 90.48 & 5.30 & 0.40 & 71.57 & 10.9 & 1.43 & 46.71 & 19.2 & 2.75 \\
MC-Dropout~\cite{gal2016dropout} & 98.32 & 1.71 & 0.34 & 90.48 & 3.90 & 0.47 & 71.57 & 9.70 & 1.48 & 46.72 & 17.4 & 3.17 \\
Label Smoothing~\cite{szegedy2016rethinking} & 98.77 & 1.68 & 0.30 & 90.71 & 2.70 & 0.38 & 71.92 & 3.30 & 1.39 & \textbf{47.19} & 5.60 & 2.93 \\ 
Mixup~\cite{thulasidasan2019mixup} & 98.83 & 1.74 & 0.24 & 90.59 & 3.30 & 0.37 & \textbf{71.85} & 2.90 & 1.44 & 46.89 & 6.80 & 2.66 \\
JEM~\cite{grathwohl2019your} & 97.23 & 1.56 & 0.21 & 90.36 & 3.30 & 0.34 & 70.28 & 2.46 & 1.31 & 45.97 & 5.42 & 2.47 \\ 
OvA DM~\cite{padhy2020revisiting} & 96.67 & 1.78 & 0.27 & 89.56 & 3.55 & 0.37 & 70.11 & 3.58 & 1.40 & 45.55 & 4.22 & 2.50 \\
Temperature Scaling~\cite{guo2017calibration} & 95.14 & 1.32 & 0.17 & 89.83 & 3.10 & 0.33 & 69.84 & 2.50 & 1.23 & 45.03 & 4.80 & 2.59 \\
DBLE~\cite{Xing2020Distance-Based} & 98.69 & 0.97 & \textbf{0.12} & \textbf{90.92} & \textbf{1.50} & 0.29 & 71.03 & 1.10 & 1.09 & 46.45 & 3.60 & 2.38 \\
\hline
{EOW-Softmax} (\emph{ours}) & \textbf{98.91} & \textbf{0.88} & 0.15 & 90.24 & 1.57 & \textbf{0.25} & 71.33 & \textbf{1.08} & \textbf{1.03} & 46.97 & \textbf{3.45} & \textbf{2.22}  \\

\bottomrule
\end{tabular}
}
\end{table*}

\begin{figure*}[t]
    \centering
    \includegraphics[width=1\linewidth]{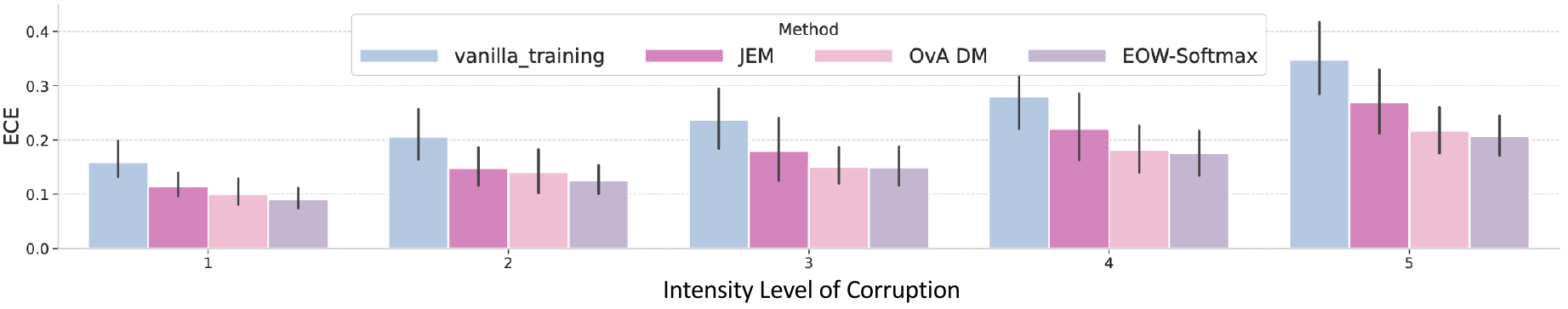}
    \caption{Results on CIFAR-100-C. Each bar represents the mean ECE on 19 different corruptions, with the vertical line segment denoting the standard deviation. In general, EOW-Softmax achieves the lowest ECE under five different corruption levels.}
    \label{fig:barplot}
\end{figure*}

\section{Experiments}
\subsection{Experimental Setup}

\keypoint{Settings}
We adopt three settings to evaluate our approach. 1) \emph{Confidence Calibration}: This aims to evaluate the effectiveness of a method in improving confidence calibration. Following~\cite{Xing2020Distance-Based}, we use four datasets: MNIST~\cite{lecun1998gradient} (MLP), CIFAR-10~\cite{krizhevsky2009learning} (VGG11~\cite{simonyan2014very}), CIFAR-100~\cite{krizhevsky2009learning} (ResNet50~\cite{he2016deep}), and Tiny-ImageNet~\cite{deng2009imagenet} (ResNet50). 
Network architectures used for these datasets are indicated in the parentheses. 2) \emph{OOD Detection}: A ResNet50 classifier is trained on CIFAR-100 and tested on the combination of CIFAR-100's test split and an OOD dataset, i.e.~CIFAR-10 and SVHN~\cite{netzer2011reading}. The goal for the classifier is to assign low confidence to as many OOD samples as possible. The accuracy is computed on predictions with confidence higher than a threshold. 3) \emph{Robustness under Corruption}: A classifier is trained on CIFAR-100. Its calibration performance is evaluated on CIFAR-100-C~\cite{hendrycks2019robustness}, where the images are perturbed by 19 different corruptions with five intensity levels.

\keypoint{Baselines}
We compare our approach with nine baseline methods: MC-Dropout~\cite{gal2016dropout}, Temperature Scaling~\cite{guo2017calibration}, Mixup~\cite{thulasidasan2019mixup}, Label Smoothing~\cite{szegedy2016rethinking}, TrustScore~\cite{TrustScore}, JEM~\cite{grathwohl2019your}, DBLE~\cite{Xing2020Distance-Based}, and OvA DM~\cite{padhy2020revisiting}.


\keypoint{Evaluation Metrics}
Following prior work~\cite{guo2017calibration}, we use two metrics to assess how well the confidence in a model's predictions is calibrated, namely Expected Calibration Error (ECE)~\cite{naeini2015obtaining} and Negative Log-Likelihood (NLL). The lower the ECE/NLL, the better the calibration performance. Below we explain in detail how these two metrics are calculated. ECE approximates the expectation of the difference between accuracy and confidence (i.e.~the likelihood on the predicted label $\hat{y}$), which can reflect how well the prediction confidence aligns with the true accuracy. Specifically, the confidence estimates made on all test samples are partitioned into $L$ equally spaced bins (following~\cite{guo2017calibration,Xing2020Distance-Based,padhy2020revisiting}, $L=15$), and the difference between the average confidence and accuracy within each bin $I_l$ is calculated,
\begin{equation}
\begin{aligned}
\text{ECE} = \sum\limits_{l=1}^{L} \frac{1}{N} |\sum\limits_{x \in I_l} p(\hat{y}|x) - \sum\limits_{x \in I_l} 1(\hat{y}=y)|,
\end{aligned}    
\end{equation}
where $N$ denotes the total number of samples in the test set. NLL computes the average negative log-likelihood on all test samples,
\begin{equation}
\begin{aligned}
\text{NLL} = - \frac{1}{N} \sum\limits_{m=1}^N \log p(\hat{y}_m|x_m).
\end{aligned}    
\end{equation}

\keypoint{Implementation Details}
We use the SGD optimizer with the learning rate of 1e-4, the momentum of 0.9, and the weight decay of 5e-4. The batch size is set to 64. The number of epochs is 200. The learning rate is decayed by 0.1 at the 100-th and 150-th epoch, respectively. For SGLD, we use a constant step size of 2 and a standard deviation of 1e-3 (see Eq~\eqref{eq:sgld}). The number of updates in each SGLD round is set to 100. To ensure that the results are convincing, we run each experiment 5 times with different random seeds and average their results. 

\subsection{Main Results}

\keypoint{Confidence Calibration}
We first evaluate the calibration performance on four standard datasets, namely MNIST, CIFAR-10/100 and Tiny-ImageNet. The results are shown in Table~\ref{tab:4_main_benchmarks}. In general, our approach EOW-Softmax obtains the best overall calibration performance on most datasets. Comparing with Vanilla Training, we observe that our EOW-Softmax achieves a similar test accuracy, while significantly improves the calibration performance in terms of ECE, especially on the three challenging datasets with natural images---6.30\%$\to$1.57\% on CIFAR-10, 19.1\%$\to$1.08\% on CIFAR-100, and 25.2\%$\to$3.45\% on Tiny-ImageNet. These results strongly suggest that our energy-based open-world uncertainty modeling has great potential for real-world applications as it shows a good balance between test accuracy and calibration performance. JEM and OvA DM are two related methods to ours. JEM is based on joint distribution modeling while OvA DM transforms the conventional softmax classifier to a distance-based one-vs-all classifier. The comparison with these two methods shows that our approach is clearly better in all metrics, which advocates our design of the $K$+1-way softmax for modeling open-world uncertainty. Compared with the top-performing baselines, i.e.~Temperature Scaling and DBLE, our EOW-Softmax is highly competitive---it obtains the best ECE on all datasets except CIFAR-10 where the performance is only slightly worse than DBLE. Among these three methods, EOW-Softmax performs the best in maintaining the original test accuracy, whereas Temperature Scaling has to sacrifice the test accuracy in exchange for improvement on ECE. This is because for fair comparison (all methods only have access to the training set), Temperature Scaling has to separate a validation set out from the original training set for tuning its scaling parameter, which reduces the amount of training data.

\begin{table}[t]
\setlength{\tabcolsep}{4pt}
\centering\small 
\caption{Test accuracy on the combination of in-distribution and OOD test set using ResNet50 trained on CIFAR-100.}
\begin{tabular}{ c | c | cccc}
\toprule
 \multirow{2}{*}{OOD dataset} & \multirow{2}{*}{Method} & \multicolumn{4}{c}{\centering Probability threshold} \\
\cline{3-6}
 &   & 0 & .25 & .5 & .75 \\
\hline
\multirow{4}{*}{CIFAR10} & Vanilla Training & 37.24 & 42.31 & 47.67 & 58.90 \\
 & JEM~\cite{grathwohl2019your} & \textbf{42.55} & 46.88 & 0.53 & 63.21 \\
 & OvA DM~\cite{padhy2020revisiting} & 39.78 & 48.54 & 54.31 & 65.20 \\
 & EOW-Softmax & 37.65 & \textbf{50.11} & \textbf{57.32} & \textbf{69.00} \\
\hline
\multirow{4}{*}{SVHN} & Vanilla Training & 20.24 & 21.33 & 24.38 & 26.90 \\
 & JEM~\cite{grathwohl2019your} & 19.87 & 22.57 & 26.22 & 30.75 \\
 & OvA DM~\cite{padhy2020revisiting} & 20.08 & 23.99 & 26.08 & 30.32 \\
 & EOW-Softmax & \textbf{20.21} & \textbf{25.58} & \textbf{28.13} & \textbf{32.68} \\ 
\bottomrule
\end{tabular}
\label{tab:ablation_OOD}
\end{table}

\keypoint{OOD Detection}
We follow~\cite{padhy2020revisiting} to simulate real-world scenarios by training a classifier on CIFAR-100 and testing on the combination of CIFAR-100's test set and an OOD dataset (CIFAR-10/SVHN). The classifier is required to assign low confidence to OOD samples such that their predictions can be rejected by a pre-defined threshold. The results are reported in Table~\ref{tab:ablation_OOD} where we compare our approach with JEM and OvA DM (as these two methods are most related to ours), as well as the vanilla training method. The probability threshold is linearly increased from 0 to 0.75. Only predictions with confidence higher than this threshold are kept. From the results, we observe that EOW-Softmax outperforms all baselines with clear margins in different thresholds (except 0). This indicates that a classifier trained with EOW-Softmax suffers much less from the overconfidence problem than the competitors, and is thus safer to be deployed in practical applications.

\keypoint{Robustness under Corruption}
We also evaluate our approach on corrupted images, i.e.~CIFAR-100-C, and compare with JEM and OVA DM. Specifically, there are five different intensities of corruption as defined in~\cite{hendrycks2019robustness}, each with 19 different corruption types. Under each intensity level of corruption, we test a model's ECE on images of all corruption types, and report their average result and the standard deviation. The comparison is illustrated in Figure~\ref{fig:barplot}. Overall, EOW-Softmax performs favorably against JEM and OVA DM, as well as the vanilla training baseline. Though both EOW-Softmax and JEM are based on energy models for generative modeling, JEM clearly has a larger variation in performance among different corruption types. This is because JEM is more difficult to train---it optimizes the log-likelihood of distribution independently from the classifier learning. In contrast, the `generative modeling' in EOW-Softmax is seamlessly integrated into the $K$+1-way classification task, which has a better training stability.

\subsection{Ablation study}
\keypoint{Hyper-parameter}
Recall that $\lambda$ in Eq.~\eqref{eq:our_obj} balances between the standard $K$-way classification loss and the energy-based loss for uncertainty modeling. We experiment with different values (1, 0.1, and 0.01) on CIFAR-10/100 to see the impact of this hyper-parameter. The results in Table~\ref{tab:ablation_lambda} show that $\lambda=0.1$ leads to the best calibration performance (lowest ECE values) on both datasets.


\keypoint{Where to Apply the SGLD Sampling?}
In our approach, SGLD sampling is applied to the latent feature space rather than the pixel space as in most EBMs. To justify this design, we experiment on CIFAR-100 with different variants of our approach where the SGLD sampling is applied to different positions, including the pixel space, stage-1, stage-2, and stage-3 (in the neural network). In addition to the test accuracy and the ECE, we also report the training speed (second per iteration) measured using a Tesla K80 GPU. Table~\ref{tab:ablation_lambda} shows that applying the SGLD sampling to the pixel space incurs huge computation overhead, while shifting the sampling to the latent space significantly improves the training speed without sacrificing the calibration performance.


\begin{table}[t]
\centering\small 
\caption{Ablation study on the impact of $\lambda$ in Eq.~\eqref{eq:our_obj}.}
\label{tab:ablation_lambda}
\begin{tabular}{c | c |ccc}
\toprule
\multirow{2}{*}{Model} &\multirow{2}{*}{$\lambda$} & \multicolumn{3}{c}{\centering CIFAR10} \\
\cline{3-5}
& &  Acc\% $\uparrow$ &  ECE\% $\downarrow$ & NLL $\downarrow$ \\
\hline
\multirow{3}{*}{VGG11} & 1 & 89.81 & 2.11 & 0.28 \\
& 0.1 &  \textbf{90.24} & \textbf{1.57} & \textbf{0.25} \\
& 0.01 &  90.11 & 3.37 & 0.36 \\
\hline
\hline
\multirow{2}{*}{Model} &\multirow{2}{*}{$\lambda$} & \multicolumn{3}{c}{\centering CIFAR100} \\
\cline{3-5}
& &  Acc\% $\uparrow$ &  ECE\% $\downarrow$ & NLL $\downarrow$ \\
\hline
\multirow{3}{*}{ResNet50} & 1 &  70.26 & 1.33 & 1.24 \\
& 0.1 &  71.33 & \textbf{1.08} & \textbf{1.03} \\
& 0.01 &  \textbf{71.51} & 2.31 & 1.31 \\
\bottomrule
\end{tabular}
\end{table}

\begin{table}[t]
\centering\small 
\caption{Ablation on where to apply the SGLD sampling.
}
\begin{tabular}{c|ccc}
\toprule
Position & Acc\% $\uparrow$ &  ECE\% $\downarrow$ & Second/Iter $\downarrow$ \\
\hline
Pixel space & 73.94 & 1.89 & 25.15 \\
Feature stage 1 & 73.21 & 2.09 & 10.31 \\
Feature stage 2 & 73.48 & 1.73 & 5.21 \\
Feature stage 3 & 73.05 & 2.17 & 2.37 \\
\bottomrule
\end{tabular}
\label{tab:ablation_hidden_layers}
\end{table}

\section{Conclusion}\label{sec:conclusion}
This paper has addressed the closed-world problem in softmax that causes neural network classifiers to produce overconfident predictions by introducing an energy-based objective to model the open-world uncertainty. 

\noindent\textbf{Acknowledgement.} \small{This study is supported by NTU NAP and Microsoft Research Lab Asia, and under the RIE2020 Industry Alignment Fund – Industry Collaboration Projects (IAF-ICP) Funding Initiative, as well as cash and in-kind contribution from the industry partner(s).}
{\small
\bibliographystyle{ieee_fullname}
\bibliography{egbib}
}

\end{document}